\documentclass[10pt]{article} 
\usepackage[preprint]{tmlr}

\usepackage{amsmath,amsfonts,bm}

\def\eqref#1{equation~\ref{#1}}

\def\1{\bm{1}}

\DeclareMathAlphabet{\mathsfit}{\encodingdefault}{\sfdefault}{m}{sl}
\SetMathAlphabet{\mathsfit}{bold}{\encodingdefault}{\sfdefault}{bx}{n}

\usepackage{hyperref}
\usepackage{booktabs}
\usepackage{url}
\usepackage{algorithmicx}
\usepackage{algpseudocode}
\usepackage{graphicx}
\usepackage{amsmath,amssymb}
\usepackage{multirow}
\usepackage{algorithm}

\usepackage{amsthm}

\newtheorem{theorem}{Theorem}
\newtheorem{lemma}{Lemma}

\title{Chance-Constrained Inference for Hallucination Risk Control in Large Language Models\texorpdfstring{\thanks{This manuscript is currently under review.}}{}}

\author{Sreenivasan Mohandas \\
International Institute of Information Technology, Hyderabad, India \\
\texttt{sreenivasanm6@gmail.com}
}

\def\month{MM}  
\def\year{YYYY} 
\def\openreview{\url{https://openreview.net/forum?id=XXXX}} 

\begin{document}

\maketitle
\begin{abstract}
Large language models generate outputs stochastically and may produce fluent but invalid responses, including factual hallucinations. Existing mitigation strategies reduce average error rates but do not provide explicit control over the \emph{frequency} of such failures under repeated use. We formulate inference as a deployment-time risk control problem and introduce \emph{chance-constrained inference}, which directly bounds the probability of hallucinations among accepted generations. Hallucinations are modeled as stochastic constraint violations, and we show that confidence-based selective prediction does not, in general, imply probabilistic risk guarantees. To enforce chance constraints efficiently, we propose a sequential, anytime-valid inference procedure that adaptively certifies feasibility or infeasibility using finite samples, avoiding conservative fixed-sample bounds. Experiments on questions inspired by NaturalQuestions and controlled multi-hop question answering demonstrate reliable risk control, early detection of intrinsically infeasible inputs, and safe composition under repeated use, while confidence-based baselines fail to provide consistent guarantees.

\end{abstract}

\section{Introduction}

Large language models (LLMs) are increasingly deployed in applications such as question answering, decision support, and multi-step reasoning. Despite strong average performance, LLMs remain unreliable in a critical respect: they may generate fluent but invalid outputs, including factual hallucinations and constraint violations \citep{maynez2020faithfulness,zhang2023hallucination}. In many applications, even infrequent failures are unacceptable, particularly when models are queried repeatedly or embedded in agentic systems.

Most existing mitigation strategies aim to reduce average error rates. Retrieval-augmented generation \citep{lewis2020retrieval}, alignment via human feedback, and post-hoc hallucination detection \citep{farquhar2024semantic} improve empirical accuracy but do not provide explicit control over how often failures occur under stochastic decoding. A closely related line of work is selective prediction and abstention \citep{el-yaniv2010selective,geifman2017selective}, which rejects low-confidence outputs based on uncertainty estimates. While effective in supervised classification, confidence-based rejection does not generally provide probabilistic guarantees for generative models, where decoding randomness induces variability even for fixed inputs.

A key limitation of prior approaches is that hallucination is implicitly treated as a deterministic failure to be eliminated. In practice, modern decoding procedures such as temperature or nucleus sampling make generation inherently stochastic \citep{holtzman2020nucleus}, rendering hallucination a random event whose probability depends on the input, model, and decoding strategy. As a result, reliability must be defined and enforced at the level of the \emph{generation distribution}, rather than on individual outputs.

In contrast, decision making under uncertainty in fields such as operations research and control is commonly framed using \emph{chance constraints}, which require constraints to hold with high probability rather than almost surely \citep{prekopa1995stochastic,calafiore2006scenario}. Chance constraints provide interpretable and adjustable guarantees on failure frequency while avoiding overly conservative behavior.

In this work, we bring chance-constrained optimization to large language model inference. We formulate hallucination control as a deployment-time risk management problem and introduce \emph{chance-constrained inference} (CCI), which bounds the frequency of hallucinations among accepted generations under stochastic decoding. The framework operates entirely at inference time, without retraining, fine-tuning, or calibration of the underlying model, and provides explicit distribution-level reliability guarantees rather than heuristic confidence thresholds.

Enforcing probabilistic guarantees raises a practical challenge: fixed-sample concentration bounds can require many generations to certify small risk budgets, which is infeasible in latency-sensitive settings. We therefore adopt a \emph{sequential, anytime-valid} inference perspective, where samples are generated adaptively and feasibility or infeasibility is certified as soon as sufficient evidence is available.

Our main contributions are:
(i) a formulation of hallucination control as a chance-constrained inference problem for stochastic language generation;
(ii) a sequential, anytime-valid inference procedure that certifies feasibility or infeasibility using finite samples;
and (iii) an empirical evaluation demonstrating reliable risk control, early detection of intrinsically infeasible inputs, and safe composition under repeated use, in contrast to confidence-based baselines.

\section{Related Work}

Hallucination has been widely studied as a core reliability issue in large
language models, referring to fluent but incorrect or unsupported generations
\citep{maynez2020faithfulness,zhang2023hallucination}.
Prior work attributes hallucinations to data sparsity, model uncertainty, and
misalignment between training objectives and downstream tasks.
A large body of research focuses on post-hoc detection using entailment checks,
self-consistency, or semantic verification
\citep{farquhar2024semantic}.
While effective for auditing and analysis, these approaches do not provide
explicit control over the \emph{frequency} of hallucinations under stochastic
decoding or repeated use.

Selective prediction and abstention aim to improve reliability by rejecting
low-confidence outputs \citep{el-yaniv2010selective,geifman2017selective}.
In supervised classification, selective prediction admits formal guarantees
because predictions are deterministic functions of inputs.
In generative models, this idea is commonly instantiated via confidence-based
thresholding (Conf-SP), which rejects outputs with low heuristic confidence
scores.
Related heuristics such as self-consistency–based selective prediction (SC-SP),
which accepts outputs that are stable across multiple stochastic generations,
have been shown to reduce average hallucination rates
\citep{wang2023selfconsistency}.
However, neither confidence-based nor self-consistency–based selective
prediction provides explicit guarantees on the conditional frequency of
hallucinations under stochastic decoding or repeated use.

Conformal prediction provides distribution-free uncertainty quantification
with finite-sample guarantees.
Selective conformal prediction (SCP) adapts conformal calibration to accept or
reject predictions based on fixed-sample estimates, yielding marginal risk
control under i.i.d.\ assumptions \citep{angelopoulos2021gentle}.
Split conformal risk control (SCRC) further enforces global risk constraints by
calibrating acceptance thresholds on held-out data
\citep{bates2023calibration}.
While these methods provide important statistical guarantees, they rely on
fixed calibration budgets, offer only marginal (input-averaged) risk control,
and do not support adaptive stopping or early infeasibility detection.

Retrieval-augmented generation conditions models on external documents to improve
factual accuracy \citep{lewis2020retrieval}.
Although grounding reduces average hallucination rates, models may still produce
unsupported claims even when relevant evidence is available
\citep{shuster2021retrieval}.
Moreover, retrieval does not provide a formal mechanism to bound the probability
of incorrect outputs under stochastic decoding.
Chance-constrained inference is complementary: retrieval enriches the input,
while chance constraints explicitly regulate output risk at inference time.

Constrained and structured decoding enforces syntactic or structural validity
during generation, such as grammar- or schema-constrained decoding
\citep{lu2021neural}.
These techniques provide deterministic guarantees for format correctness but are
limited to constraints that can be symbolically encoded.
They do not address semantic correctness or factual validity, which are
inherently probabilistic and context-dependent.
Our framework generalizes constrained decoding by allowing probabilistic
enforcement of semantic constraints.

A growing literature studies uncertainty estimation and calibration for language
models using self-consistency, ensembles, or confidence prompting
\citep{wang2023selfconsistency,kadavath2022language}, as well as post-hoc
calibration techniques \citep{guo2017calibration,huang2024calibration}.
While uncertainty estimates are useful auxiliary signals, they do not by
themselves enforce reliability guarantees.
A model may be well-calibrated yet still violate constraints at an unacceptable
frequency when deployed repeatedly.
Our approach leverages uncertainty signals when available but enforces explicit
probabilistic bounds on violation rates.

Chance-constrained optimization is a classical framework for decision making
under uncertainty, requiring constraints to hold with high probability rather
than almost surely \citep{prekopa1995stochastic,calafiore2006scenario}.
Scenario-based methods provide finite-sample feasibility guarantees
\citep{campi2018introduction}.
We adapt these ideas to language model inference, where decisions correspond to
accepting or rejecting stochastic generations rather than selecting
deterministic actions.

Sequential hypothesis testing and anytime-valid concentration inequalities
enable guarantees under adaptive stopping
\citep{howard2021time,howard2021confidence}.
These techniques are widely used in online experimentation and safety
monitoring but have not been systematically applied to inference-time control
of generative models.
Our work integrates anytime-valid risk estimation with chance-constrained
inference, enabling practical deployment-time guarantees with adaptive sampling.

Recent systems work proposes end-to-end frameworks for hallucination mitigation
in deployed LLMs by combining uncertainty signals, semantic consistency checks,
and continuous monitoring \citep{pesaranghader2025hallucination}.
While operationally valuable, these approaches do not impose explicit
probabilistic constraints on the long-run frequency of hallucinations under
stochastic decoding, nor do they provide finite-sample guarantees under repeated
or sequential use.
Our work is complementary, focusing on inference-time risk control with formal
chance-constrained guarantees on conditional violation rates.

In summary, prior work reduces hallucinations on average, detects failures
post hoc, or rejects individual outputs based on confidence, consistency, or
structure.
However, none provide a unified mechanism for controlling the
\emph{distribution-level frequency} of semantic violations under stochastic
decoding and repeated use.
This work addresses that gap by introducing chance-constrained inference with
sequential, anytime-valid guarantees.

\section{Notation and Problem Setup}

We model large language model (LLM) inference as a stochastic generation process.
Let $\mathcal{X}$ denote the space of inputs, where each $x \in \mathcal{X}$ may consist
of a user query optionally augmented with retrieved context, and let $\mathcal{Y}$ denote
the space of possible outputs. A pretrained language model with parameters $\theta$ induces a conditional distribution
\begin{equation}
y \sim p_\theta(\cdot \mid x),
\end{equation}
where randomness arises from inference-time decoding procedures such as temperature or
nucleus sampling. For conceptual clarity, we may equivalently represent generation as
\begin{equation}
y = f_\theta(x,\omega),
\end{equation}
where $\omega$ is an auxiliary random variable capturing all decoding randomness.
All probabilistic statements are taken with respect to $\omega$, conditioned on a fixed
input $x$. 

For each input $x$, we define a set of valid outputs
$\mathcal{Y}_{\mathrm{safe}}(x) \subseteq \mathcal{Y}$.
Outputs outside this set are considered invalid.
Validity is encoded via a binary violation indicator
\begin{equation}
H(x,y) = \mathbb{I}\!\left[y \notin \mathcal{Y}_{\mathrm{safe}}(x)\right],
\end{equation}
where $H(x,y)=1$ corresponds to a hallucination or other constraint violation. Because generation is stochastic, $H(x,y)$ is itself a random variable.
We define the violation probability for input $x$ as
\begin{equation}
R(x) = \mathbb{P}_{y \sim p_\theta(\cdot \mid x)}\!\left[H(x,y)=1\right],
\end{equation}
which captures the intrinsic risk induced by the model and decoding procedure for that input. Separately, we define a utility function
\begin{equation}
U : \mathcal{X} \times \mathcal{Y} \rightarrow \mathbb{R},
\end{equation}
which measures task-specific usefulness, such as correctness or informativeness.
Utility and validity are treated as distinct notions: an output may be useful yet invalid,
or valid but low-utility. In deployment, not all generated outputs are returned to the user.
We model this behavior via an acceptance policy
\begin{equation}
A : \mathcal{X} \times \mathcal{Y} \rightarrow \{0,1\},
\end{equation}
where $A(x,y)=1$ indicates acceptance and $A(x,y)=0$ indicates rejection or abstention.
Acceptance may depend on inference-time signals such as external verifiers, confidence
estimates, or rule-based checks.

Finally, each input $x$ is associated with a risk budget $\epsilon(x)\in[0,1]$,
specifying the maximum tolerable frequency of violations among user-visible outputs
under repeated stochastic generation.
Our objective is to design inference-time procedures that maximize expected utility
while ensuring that violation frequency does not exceed $\epsilon(x)$. This distribution-level notion of reliability is essential in repeated-use and agentic
settings, where even small per-query risks may accumulate over time.

\section{Constraint Modeling and Hallucination}

We model hallucinations and other failure modes as violations of explicit constraints
imposed on generated outputs.
This abstraction enables domain-agnostic reasoning about reliability and allows
probabilistic guarantees to be enforced at the level of the generation distribution,
rather than on individual samples. Rather than assuming a single notion of correctness, we consider a collection of
binary constraint functions
\begin{equation}
H_k : \mathcal{X} \times \mathcal{Y} \rightarrow \{0,1\}, \quad k = 1,\dots,K,
\end{equation}
where $H_k(x,y)=1$ indicates that output $y$ violates the $k$-th constraint under input $x$.
Constraints may encode factual consistency, logical validity, numerical correctness,
policy compliance, or other domain-specific requirements.
This decomposition separates heterogeneous failure modes that are often conflated
in aggregate hallucination metrics and mirrors structures used in post-hoc factuality
and verification-based evaluation \citep{maynez2020faithfulness}.

To reason about overall validity, we define an aggregate violation indicator
\begin{equation}
H(x,y) = \mathbb{I}\!\left[\exists k \in \{1,\dots,K\} \text{ such that } H_k(x,y)=1\right],
\end{equation}
which equals one if any constraint is violated.
This corresponds to a conservative safety model in which all constraints must be satisfied. Violations may differ in severity.
To accommodate heterogeneous failure impact, we introduce a nonnegative violation cost
\begin{equation}
C(x,y) = \sum_{k=1}^K w_k H_k(x,y), \quad w_k \ge 0,
\end{equation}
where larger weights correspond to more severe violations.
The binary violation indicator is recovered as the special case
$H(x,y)=\mathbb{I}[C(x,y)>0]$.

Under stochastic decoding, the violation indicator $H(x,y)$ is itself a random variable.
We define the hallucination probability for input $x$ as
\begin{equation}
R(x) = \mathbb{P}_{y \sim p_\theta(\cdot \mid x)}\!\left[H(x,y)=1\right].
\end{equation}
Importantly, $R(x)$ is input-dependent: some inputs admit low violation probability
under reasonable decoding, while others are intrinsically high-risk due to ambiguity,
missing context, or underspecification.
Most existing mitigation strategies aim to reduce $R(x)$ on average, but do not provide
guarantees for individual inputs or distinguish between feasible and infeasible cases
under a prescribed risk budget.

Prior work on hallucination largely emphasizes \emph{detection}—identifying incorrect
outputs after generation using verifiers, self-consistency, or uncertainty estimates
\citep{farquhar2024semantic}.
While valuable for auditing and selective rejection, detection alone does not regulate
the underlying violation probability induced by stochastic decoding.

In contrast, our formulation treats hallucination as a stochastic event whose
\emph{frequency} can be explicitly constrained.
This distinction mirrors the classical difference between error detection and
risk control in decision making under uncertainty:
detection identifies failures after they occur,
whereas risk control bounds how often failures are allowed to occur.
This perspective provides the foundation for chance-constrained inference.

\section{Chance-Constrained Inference}

We now introduce \emph{chance-constrained inference} (CCI) for large language models.
Building on the stochastic formulation of hallucination, CCI imposes explicit
probabilistic bounds on the \emph{frequency} of constraint violations induced by
randomized decoding.

For a fixed input $x$, recall that the violation probability under the raw generation
distribution is
\begin{equation}
R(x) = \mathbb{P}_{y \sim p_\theta(\cdot \mid x)}\!\left[H(x,y)=1\right].
\end{equation}
Given a risk budget $\epsilon(x)\in[0,1]$, the corresponding chance constraint requires
\begin{equation}
R(x) \le \epsilon(x).
\label{eq:chance_constraint}
\end{equation}
Constraint~\eqref{eq:chance_constraint} bounds the long-run frequency of invalid outputs
under repeated stochastic generation for a fixed input.
This formulation parallels classical chance constraints in stochastic optimization,
where constraints are required to hold with high probability rather than almost surely
\citep{prekopa1995stochastic,calafiore2006scenario}.

\paragraph{Conditional Risk and Acceptance.}
In deployed systems, not all generated outputs are returned to the user.
Instead, outputs are filtered by verifiers, confidence checks, or rule-based mechanisms.
The acceptance policy $A(x,y)$ determines whether a generated output is returned,
with $A(x,y)=1$ indicating acceptance.

The relevant reliability quantity is therefore the conditional violation probability
among accepted outputs,
\begin{equation}
\mathbb{P}\!\left(H(x,y)=1 \mid A(x,y)=1\right),
\end{equation}
which measures the frequency of violations visible to the user.
Chance-constrained inference enforces the conditional constraint
\begin{equation}
\mathbb{P}\!\left(H(x,y)=1 \mid A(x,y)=1\right) \le \epsilon(x),
\label{eq:conditional_chance_constraint}
\end{equation}
whenever $\mathbb{E}[A(x,y)]>0$.
Equivalently,
\begin{equation}
\frac{\mathbb{E}[H(x,y)A(x,y)]}{\mathbb{E}[A(x,y)]} \le \epsilon(x),
\end{equation}
which directly controls the long-run frequency of violations among accepted generations.

\paragraph{Inference as Feasibility Certification.}
Crucially, chance-constrained inference is not a point-estimation or thresholding problem.
For a fixed input $x$ and risk budget $\epsilon(x)$, the objective is not to estimate
$R(x)$ accurately, but to determine whether the conditional chance constraint
\eqref{eq:conditional_chance_constraint} is \emph{satisfiable} with high confidence. Accordingly, inference induces a three-way partition of inputs:
\emph{feasible} inputs, for which the constraint is certified to hold;
\emph{infeasible} inputs, for which the constraint is provably violated; and
\emph{undecided} inputs, for which neither conclusion can be reached within the sampling
budget.
Abstention on infeasible or undecided inputs is therefore a \emph{correct and informative
outcome}, reflecting intrinsic model--input mismatch rather than conservative failure.
\paragraph{Operational Acceptance Policy.}
In all experiments, acceptance is defined as follows.
For a fixed input $x$, Algorithm~\ref{alg:cci} is run sequentially.
If feasibility is certified at time $\tau$, the next generated output
$y_{\tau+1}$ is returned to the user.
If the algorithm returns \emph{Infeasible} or \emph{Undecided},
the system abstains and returns no output.

\paragraph{Feasibility Gap.}
To quantify how far an input is from satisfying the chance constraint, we define the
\emph{feasibility gap}
\begin{equation}
\Delta(x) = R(x) - \epsilon(x).
\end{equation}
Inputs with $\Delta(x)\le 0$ admit at least one feasible acceptance policy, while inputs
with $\Delta(x)>0$ are intrinsically infeasible under the specified risk budget.
Sequential inference can thus be interpreted as adaptively estimating the \emph{sign} of
$\Delta(x)$ rather than the exact value of $R(x)$. This perspective explains the sharp phase transitions observed empirically:
inputs near $\Delta(x)=0$ require many samples to certify feasibility or infeasibility,
while strongly infeasible inputs are rejected after few samples.

\paragraph{Limits of Confidence-Based Selective Prediction.}
Selective prediction accepts or rejects outputs based on confidence thresholds or
uncertainty estimates, but does not, in general, enforce the conditional chance constraint
\eqref{eq:conditional_chance_constraint}.
For example, consider a stochastic generator that produces hallucinated outputs with
probability $0.1$ and assigns identical confidence scores to valid and invalid outputs.
Any confidence-based acceptance rule will accept all outputs, yielding
\[
\mathbb{P}(H=1 \mid A=1)=0.1,
\]
violating the chance constraint for any $\epsilon<0.1$ despite perfect calibration.
Thus, confidence calibration alone does not imply probabilistic risk control.

\paragraph{Conceptual Contribution.}
While conditional risk constraints of the form
$\mathbb{P}(H=1 \mid A=1)\le\epsilon$
have appeared previously, our contribution is not a new constraint but a new
\emph{inference paradigm}.
We recast inference-time generation as a feasibility certification problem over inputs
under stochastic decoding, with abstention treated as a first-class and correct outcome.
This shift—from confidence-based output filtering to sequential feasibility testing—
enables principled abstention, early detection of infeasible inputs, and compositional
safety guarantees that are not achieved by prior selective prediction methods.

\section{Utility--Risk Tradeoffs (Discussion)}

Chance constraints specify acceptable risk levels but do not uniquely determine
which outputs should be accepted when multiple candidates are available.
In Appendix~\ref{app:utility_risk}, we provide a formal utility--risk optimization
formulation that characterizes optimal acceptance policies under probabilistic
risk constraints and clarifies how utility and hallucination risk are traded off
under explicit guarantees.

\section{Finite-Sample Risk Estimation and Guarantees}
\label{sec:finite_sample}

Chance constraints depend on the true violation probability
\(
R(x)=\mathbb{P}(H(x,y)=1),
\)
which is intractable to compute exactly for large language models.
We therefore enforce chance constraints at inference time using
finite-sample estimates with explicit probabilistic guarantees.

For a fixed input $x$, let
\(
y_1,y_2,\dots \sim p_\theta(\cdot\mid x)
\)
be independent stochastic generations.
After $N$ samples, the empirical violation rate is
\begin{equation}
\hat R_N(x)=\frac{1}{N}\sum_{i=1}^N H(x,y_i),
\end{equation}
where $H(x,y)\in\{0,1\}$.
This estimator is unbiased and converges almost surely to $R(x)$ as
$N\to\infty$.
Here $H(x,y)$ is a binary violation indicator, so each observation is
Bernoulli-distributed with mean $R(x)$.
Since chance constraints regulate the \emph{frequency} of violations rather
than their magnitude, modeling violations as Bernoulli variables is both
necessary and sufficient for distribution-level risk control \citep{prekopa1995stochastic,calafiore2006scenario,angelopoulos2021gentle}.

To obtain finite-sample guarantees, we construct an anytime-valid
confidence sequence for the Bernoulli mean using a stitched
Hoeffding bound \citep{howard2021confidence}. For any confidence level $\delta\in(0,1)$ and any $N$,
we construct a \emph{time-uniform confidence sequence} for the Bernoulli
violation process using a stitched Hoeffding bound.
For any $\delta\in(0,1)$, with probability at least $1-\delta$,
\begin{equation}
\forall n \ge 1:\quad
R(x)
\;\le\;
\hat R_n(x)
+
\sqrt{\frac{\log\!\big(2\log_2(2n)/\delta\big)}{2n}}.
\label{eq:cs_hoeffding}
\end{equation}

This bound holds uniformly over all sample sizes and remains valid
under arbitrary adaptive stopping rules
\citep{howard2021confidence}, making it suitable for sequential
feasibility certification at inference time.

We use Hoeffding’s inequality because it provides distribution-free,
anytime-valid confidence bounds under adaptive stopping with minimal
assumptions \citep{hoeffding1963probability,howard2021confidence}.
In our setting, the violation indicator $H(x,y)\in\{0,1\}$ may be
input-dependent and evaluated by imperfect or noisy verifiers,
making variance-sensitive or fixed-sample bounds unreliable.
Hoeffding bounds therefore provide a conservative but robust mechanism
for inference-time feasibility certification.

During inference, this bound is evaluated sequentially as samples are
generated.
Sampling terminates as soon as either feasibility
\[
\hat R_N(x)+r_N \le \epsilon(x),
\qquad
r_N = \sqrt{\frac{\log(2\log_2(2N)/\delta)}{2N}},
\]
or infeasibility
\[
\hat R_N(x)-r_N > \epsilon(x).
\]
is certified with confidence $1-\delta$.
If neither condition is met within a maximum sampling budget,
the input is declared \emph{undecided} and the system abstains. This sequential procedure provides inference-time probabilistic
guarantees while adapting the number of samples to intrinsic input
difficulty.
More detailed analysis of anytime validity and sample complexity is
provided in the Appendix~\ref{app:anytime}..
\begin{theorem}[Anytime-Valid Feasibility Certification]
\label{thm:anytime}
Fix an input $x$, risk budget $\epsilon(x)$, and confidence level $\delta$.
Let Algorithm~\ref{alg:cci} terminate at a (random) stopping time $\tau$.

With probability at least $1-\delta$:
\begin{enumerate}
    \item If the algorithm returns \emph{Feasible}, then
    $
    \mathbb{P}_{y \sim p_\theta(\cdot \mid x)}[H(x,y)=1] \le \epsilon(x).
    $
    \item If the algorithm returns \emph{Infeasible}, then
    $
    \mathbb{P}_{y \sim p_\theta(\cdot \mid x)}[H(x,y)=1] > \epsilon(x).
    $
\end{enumerate}
These guarantees hold under arbitrary data-dependent stopping.
\end{theorem}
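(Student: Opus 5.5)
The plan is to reduce both claims to a single ``good event'' on which the confidence sequence brackets $R(x)$ at every time simultaneously. For each $n\ge 1$ write
\[
r_n=\sqrt{\frac{\log\!\big(2\log_2(2n)/\delta\big)}{2n}},
\]
and define
\[
\mathcal{G}=\Big\{\forall n\ge 1:\ \hat R_n(x)-r_n\le R(x)\le \hat R_n(x)+r_n\Big\}.
\]
The first step is to show $\mathbb{P}(\mathcal{G})\ge 1-\delta$.

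\textbf{Step 1: the good event has probability at least $1-\delta$.} Equation~\eqref{eq:cs_hoeffding} as stated gives only the upper side. The Infeasible claim, however, needs the lower side $R(x)\ge \hat R_n(x)-r_n$, so I must account for both tails. I would invoke the stitched Hoeffding construction of \citet{howard2021confidence} separately for $\{H(x,y_i)\}$ and for $\{1-H(x,y_i)\}$, the latter being Bernoulli with mean $1-R(x)$. Each side is allotted level $\delta/2$, and the factor $2$ inside the logarithm in $r_n$ is what absorbs this split. A union bound then yields $\mathbb{P}(\mathcal{G})\ge1-\delta$.

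\textbf{Main obstacle.} The delicate point is whether the stated radius genuinely delivers a two-sided, time-uniform guarantee at total level $\delta$. My first attempt is to check this by a peeling argument over geometric epochs $[2^{k},2^{k+1})$. On each epoch I would apply Hoeffding's maximal inequality (Azuma/Doob applied to the martingale $\sum_i (H_i-R)$) at level $\delta/(2\,c\,(k+1)^2)$ per side. I would then verify that $\sum_k$ of these levels is at most $\delta/2$, and that $r_n$ dominates the resulting epoch-wise radius. If the constants turn out not to match exactly, I would fall back on citing the two-sided version of the bound from \citet{howard2021confidence}. The time-uniformity in $n$ is the essential feature here.

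\textbf{Step 2: deduce both claims on $\mathcal{G}$.} Because $\mathcal{G}$ controls every $n$ simultaneously, it also controls the random index $n=\tau$, whatever stopping rule produced $\tau$. This is where the clause ``arbitrary data-dependent stopping'' is discharged: no optional-stopping argument is needed beyond the uniformity already built into $\mathcal{G}$. On $\mathcal{G}$ the two claims follow directly.
\begin{itemize}
\item If the algorithm returns Feasible at time $\tau$, then $\hat R_\tau+r_\tau\le\epsilon(x)$. Combined with $R(x)\le\hat R_\tau+r_\tau$, this gives $R(x)\le\epsilon(x)$.
\item If it returns Infeasible, then $\hat R_\tau-r_\tau>\epsilon(x)$. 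Combined with $R(x)\ge\hat R_\tau-r_\tau$, this gives $R(x)>\epsilon(x)$.
\end{itemize}
The outcome Undecided imposes no claim. Since both implications hold on the same event $\mathcal{G}$, the conclusion holds jointly with probability at least $1-\delta$.
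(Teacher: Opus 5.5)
Your reduction is the same as the paper's proof in Appendix~\ref{app:anytime}: one two-sided, time-uniform event, on which both certificates are read off at the random index $n=\tau$, so arbitrary stopping costs nothing. Step~2 is fine. Your chain $R(x)\ge \hat R_\tau-r_\tau>\epsilon(x)$ is in fact the correct one; the paper writes a strict first inequality there. You are also right that the main-text bound is only one-sided, and the paper simply asserts the two-sided version, citing Howard et al.

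The gap is Step~1, and it cannot be closed for this $r_n$, neither by your peeling nor by the fallback citation. Your peeling check fails quantitatively. At the start $n=2^k$ of an epoch you would need $r_{2^k}^2\ge \log\big(2c(k+1)^2/\delta\big)/2^k$, whereas $r_{2^k}^2=\log\big(2(k+1)/\delta\big)/2^{k+1}$. You lose a factor $2$ from running the maximal inequality over $2^{k+1}$ steps. More seriously, the per-side level implicit in $r_n$ on epoch $k$ is $\delta/(2(k+1))$, which is not summable over $k$. Howard et al.'s stitched boundaries put a coefficient $s>1$ on the log-log term precisely so that the epoch levels sum to a multiple of $\zeta(s)<\infty$. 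The paper's $r_n$ corresponds to $s=1$, and no citation supplies it.

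This is not a matter of constants: the claim $\mathbb{P}(\mathcal{G})\ge 1-\delta$ is false. Take $R(x)=1/2$ and let $T_n=\sum_{i\le n}\big(2H(x,y_i)-1\big)$, which is a simple random walk. Then $\hat R_n(x)-R(x)>r_n$ holds iff $T_n>\sqrt{n}\,\varphi(n)$, where $\varphi(n)^2=2\log\log_2(2n)+2\log(2/\delta)=2\log\log n+O(1)$. The series $\sum_n \varphi(n)\,n^{-1}e^{-\varphi(n)^2/2}$ diverges like $\sum_n\sqrt{\log\log n}/(n\log n)$. So by the Erd\H{o}s--Feller--Kolmogorov--Petrowsky integral test, this boundary is crossed infinitely often almost surely, and by symmetry so is the lower one. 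Hence $\mathbb{P}(\mathcal{G})=0$.

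If in addition $\epsilon(x)=1/2$, Algorithm~\ref{alg:cci} returns the wrong certificate Infeasible with probability approaching about $1/2$ as $N_{\max}\to\infty$, by the symmetry $T_n\mapsto -T_n$. So the theorem as stated fails for large $N_{\max}$, and the paper's proof rests on the same false premise.

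What survives is your Step~2, verbatim, once $r_n$ is replaced by a radius that really yields a two-sided confidence sequence. Two options:
\begin{itemize}
\item Your own peeling with $c=\pi^2/6$ gives $r_n=\sqrt{\log\big(\pi^2(\log_2(2n))^2/(3\delta)\big)/n}$.
\item Since the algorithm never passes $N_{\max}$, a plain union bound over $n\le N_{\max}$ gives $r_n=\sqrt{\log(2N_{\max}/\delta)/(2n)}$.
\end{itemize}
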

A proof follows from interpreting Hoeffding’s inequality as a time-uniform
confidence sequence and is provided in appendix~\ref{app:anytime}.
\begin{theorem}[Time-Uniform Feasibility Certification]
\label{thm:cs_feasibility}
Fix an input $x$ and risk budget $\epsilon(x)$.
Let $\{\hat R_n(x)\}_{n\ge1}$ be the empirical violation rate computed
from i.i.d.\ Bernoulli samples $H(x,y_1),H(x,y_2),\dots$.
Define the confidence radius $r_n$ as in Equation~\eqref{eq:cs_hoeffding}.

Then, with probability at least $1-\delta$, the following hold
simultaneously for all $n$:
\begin{enumerate}
\item If $\hat R_n(x)+r_n \le \epsilon(x)$, then $R(x)\le\epsilon(x)$
(feasibility is correctly certified).
\item If $\hat R_n(x)-r_n > \epsilon(x)$, then $R(x)>\epsilon(x)$
(infeasibility is correctly certified).
\end{enumerate}
Consequently, Algorithm~\ref{alg:cci} produces correct feasibility or
infeasibility decisions under arbitrary adaptive stopping.
\end{theorem}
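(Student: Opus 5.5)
The plan is to reduce both claims to a single high-probability event on which the true risk $R(x)$ lies inside a two-sided time-uniform confidence band around $\hat R_n(x)$ for every $n$ simultaneously. On that event, each certification rule is a deterministic implication. Throughout, write $X_i = H(x,y_i)\in\{0,1\}$. These are i.i.d.\ Bernoulli with mean $R(x)$, since the generations are drawn independently from $p_\theta(\cdot\mid x)$ for the fixed input $x$.

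\textbf{Step 1: the two-sided time-uniform event.} I want to show that
\[
E=\bigl\{\forall n\ge 1:\ |\hat R_n(x)-R(x)|\le r_n\bigr\}
\]
has probability at least $1-\delta$. Equation~\eqref{eq:cs_hoeffding} is stated only as an upper bound, so I would build the two-sided version by stitching.

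\begin{enumerate}
\item Partition time into geometric epochs $n\in[2^k,2^{k+1})$ for $k=0,1,2,\dots$.
\item On each epoch, apply Hoeffding's maximal inequality to the martingale $S_n=\sum_{i\le n}(X_i-R(x))$. Doob's maximal inequality applied to $\exp(\lambda S_n)$, with $\lambda$ tuned to the epoch, controls $\max_{n\text{ in epoch}} S_n$.
\item Allocate failure probability $\delta_k\propto \delta/(k+1)^2$ across epochs, or use the allocation implicit in the $\log(2\log_2(2n)/\delta)$ term as in \citet{howard2021confidence}.
\item Handle both tails the same way, each with budget $\delta/2$. The factor $2$ inside the logarithm absorbs this two-sided split.
\item Take a union bound over the two tails and over all epochs.
\end{enumerate}

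\textbf{Step 2: feasibility and infeasibility on $E$.} Both follow directly.
\begin{itemize}
\item If $\hat R_n(x)+r_n\le\epsilon(x)$ for some $n$, then on $E$ we have $R(x)\le \hat R_n(x)+r_n\le\epsilon(x)$. This is claim~1.
\item If $\hat R_n(x)-r_n>\epsilon(x)$, then on $E$ we have $R(x)\ge \hat R_n(x)-r_n>\epsilon(x)$. This is claim~2.
\end{itemize}
Both implications hold for every $n$ simultaneously on the single event $E$, which gives the ``simultaneously for all $n$'' clause.

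\textbf{Step 3: adaptive stopping.} Algorithm~\ref{alg:cci} stops at a data-dependent time $\tau$, which may also be capped by the budget. Its output is a function of $(\hat R_\tau,r_\tau)$ and is determined by one of the two rules above evaluated at $n=\tau$. Since $E$ controls every $n$ at once, it controls $n=\tau$ for any random $\tau$, whether or not $\tau$ is a stopping time. So on $E$ a \emph{Feasible} or \emph{Infeasible} output is always correct. An \emph{Undecided} output makes no claim. This yields the final sentence of the theorem and also Theorem~\ref{thm:anytime}.

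\textbf{Main obstacle.} The only nontrivial step is Step~1: checking that the specific radius $r_n=\sqrt{\log(2\log_2(2n)/\delta)/(2n)}$ really gives total failure probability at most $\delta$ for the two-sided band. A naive per-epoch allocation with $\delta_k=\delta/\log_2(\cdot)$ does not sum to something finite, since $\sum_k 1/(k+1)$ diverges. So the constants must be checked carefully.

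My first attempt would be to invoke the stitched bound of \citet{howard2021confidence} directly as a black box. That result gives a valid two-sided confidence sequence with an iterated-logarithm radius, and Equation~\eqref{eq:cs_hoeffding} is presented in the paper as an instance of it. I would then verify that the displayed $r_n$ dominates the radius from that result, adjusting the constant inside the logarithm if necessary, for example replacing $\log_2(2n)$ by $(\log_2(2n))^2$. After this, Steps~2 and~3 are purely logical.
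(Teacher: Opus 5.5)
Your Steps~2 and~3 match the paper's argument exactly. The sketch, and Appendix~\ref{app:anytime}, takes the two-sided event $\{\forall n:\ |\hat R_n(x)-R(x)|\le r_n\}$ to hold with probability $1-\delta$ ``by construction'' from \citet{howard2021confidence}. It then reads off both certifications by interval containment, which covers any random $\tau$.

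The gap is Step~1, and the obstacle you flagged is not a matter of constants. With the displayed $r_n$ it cannot be closed, and the paper's proof has the same hole. For each $n$, Hoeffding leaves tail mass $\delta/(2\log_2(2n))$ on each side, which is not summable, and this is more than union-bound slack. Take $R(x)=1/2$. Then $2S_n$ (your $S_n=\sum_{i\le n}(X_i-R(x))$) is a simple symmetric random walk, and $\hat R_n(x)-R(x)>r_n$ if and only if $2S_n>\sqrt n\,\phi(n)$, where $\phi(n)=\sqrt{2\log(2\log_2(2n)/\delta)}$ is nondecreasing. Since
\[
\sum_{n\ge1}\frac{\phi(n)}{n}\,e^{-\phi(n)^2/2}=\sum_{n\ge1}\frac{\delta\,\phi(n)}{2n\log_2(2n)}=\infty,
\]
the Kolmogorov--Erd\H{o}s--Feller--Petrovski integral test says this boundary is crossed infinitely often almost surely. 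So your event $E$ has probability $0$, not at least $1-\delta$. With $\epsilon(x)=R(x)=1/2$, claim~2 of the theorem fails almost surely, so the statement is false as written.

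The domination check in your last paragraph therefore also fails. The radius $r_n\sim\sqrt{\log\log n/(2n)}$ sits exactly on the law-of-the-iterated-logarithm boundary, whereas Howard et al.'s stitched radius carries $s\log\log n$ with $s>1$. No constant in place of the $2$ inside $\log(2\log_2(2n)/\delta)$ rescues it.

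What your epoch argument does prove is the theorem with a corrected radius, after which Steps~2 and~3 go through verbatim. Use ratio-$2$ epochs, $\delta_k=6\delta/(\pi^2(k+1)^2)$, and a budget of $\delta_k/2$ per tail. Apply Doob plus Hoeffding's lemma in the form $\mathbb{P}(\max_{n\le 2^{k+1}}S_n\ge a)\le e^{-2a^2/2^{k+1}}$ at $a=\sqrt{2^k\log(2/\delta_k)}$. Then $n\tilde r_n\ge a$ on epoch $k$, and you get the valid two-sided time-uniform radius
\[
\tilde r_n=\sqrt{\frac{\log\bigl(\pi^2(\log_2(2n))^2/(3\delta)\bigr)}{n}}.
\]
So squaring the iterated-log term, as you suggested, is the right repair. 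With this simple ratio-$2$ stitching you also give up the factor $2$ in the denominator, because the boundary is pinned at the start of each epoch while the maximal inequality runs to its end.

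Alternatively, Algorithm~\ref{alg:cci} never exceeds $N_{\max}$ samples. A plain union bound over $n\le N_{\max}$ with radius $\sqrt{\log(2N_{\max}/\delta)/(2n)}$ therefore already suffices for the algorithmic consequence. The ``for all $n$'' claim with the paper's $r_n$, however, cannot be proved.
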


\begin{proof}[Proof sketch]
Equation~\eqref{eq:cs_hoeffding} defines a time-uniform confidence
sequence for the Bernoulli mean $R(x)$.
By construction, the confidence interval
$[\hat R_n-r_n,\hat R_n+r_n]$ contains $R(x)$ for all $n$ simultaneously
with probability at least $1-\delta$.
Therefore, any stopping rule based on this sequence preserves coverage.
The feasibility and infeasibility conditions follow directly from
interval containment.
\end{proof}
\paragraph{Independence assumption.}
The guarantees in this section rely on conditional independence of
stochastic generations given the input.
In practical deployments, strong dependence may arise from deterministic
decoding, beam search, shared key--value caches, or extremely low
temperatures.
Such coupling can invalidate concentration-based guarantees. Independence can be restored using standard deployment techniques,
including temperature-based sampling, cache resets, randomized prompts,
or decorrelation across decoding calls.
We analyze this assumption and its implications for the returned output
in Appendix~\ref{app:conditional_risk}.

\section{Severity-Weighted and Hierarchical Constraints}
\label{sec:severity_constraints}

The binary violation model treats all hallucinations as equally
undesirable.
In practice, failures of large language models are heterogeneous:
minor factual imprecision may be tolerable, while rare but severe
failures (e.g., unsafe instructions or critical numerical errors) can
have disproportionate impact.
We describe extensions of chance-constrained inference that account
for violation severity while preserving inference-time guarantees.

\paragraph{Severity-weighted violations.}
Rather than modeling validity as a binary event, we associate each
output with a nonnegative violation cost
\begin{equation}
C : \mathcal{X}\times\mathcal{Y}\rightarrow \mathbb{R}_{\ge 0},
\end{equation}
where larger values correspond to more severe failures.
The binary violation indicator used throughout the paper is recovered
as the special case
\[
H(x,y)=\mathbb{I}[C(x,y)>0],
\]
so severity-aware modeling strictly generalizes the binary framework. Instead of constraining the probability of \emph{any} violation, we may
impose a probabilistic constraint on violations exceeding a severity
threshold:
\begin{equation}
\mathbb{P}\big(C(x,y)>\tau\big)\le \epsilon(x),
\end{equation}
where $\tau$ is task-dependent.
Finite-sample certification proceeds exactly as in
Section~\ref{sec:finite_sample} by replacing $H(x,y)$ with
$\mathbb{I}[C(x,y)>\tau]$.

\paragraph{Hierarchical constraints.}
Many applications impose multiple constraints with different priorities
(e.g., safety, factuality, stylistic quality).
These can be modeled using a hierarchy of violation costs
\(
C_1(x,y),\dots,C_L(x,y),
\)
ordered by importance.
Inference proceeds lexicographically: higher-priority constraints must
be certified feasible before lower-priority constraints are considered.
Safety-critical constraints may be enforced deterministically, while
lower-priority requirements are regulated probabilistically via chance
constraints.

All such extensions operate entirely at inference time and reuse the
same sequential feasibility certification procedure.
Infeasibility certificates remain relative to the chosen confidence
sequence; conservative bounds may declare infeasibility even when the
true violation probability is marginally below $\epsilon(x)$.
This reflects a deliberate tradeoff favoring anytime validity under
minimal assumptions.

\section{Sequential Chance-Constrained Inference Algorithm}
\label{sec:cci_algorithm}

We now present the inference-time algorithm used in practice to enforce
chance-constrained inference.
The procedure operates sequentially, drawing stochastic generations from the
language model and adaptively certifying feasibility or infeasibility using
anytime-valid concentration bounds.

\paragraph{Problem setting.}
For a fixed input $x$, the algorithm receives a risk budget $\epsilon(x)$,
a confidence level $\delta$, and a maximum sampling budget $N_{\max}$.
It outputs one of three decisions:
\emph{feasible}, \emph{infeasible}, or \emph{undecided}.
Feasible and infeasible outcomes are certified with confidence at least $1-\delta$,
while undecided corresponds to insufficient evidence within the sampling budget.

\begin{algorithm}[t]
\caption{Sequential Chance-Constrained Inference (CCI)}
\label{alg:cci}
\begin{algorithmic}[1]
\Require Input $x$, risk budget $\epsilon(x)$, confidence level $\delta$, max samples $N_{\max}$
\State $n \gets 0$, $s \gets 0$
\While{$n < N_{\max}$}
    \State Sample $y \sim p_\theta(\cdot \mid x)$
    \State Evaluate violation indicator $H(x,y) \in \{0,1\}$
    \State $n \gets n + 1$
    \State $s \gets s + H(x,y)$
    \State $\hat R_n \gets s / n$
    \State $r_n \gets \sqrt{\log(2\log_2(2n)/\delta)/(2n)}$
    \If{$\hat R_n + r_n \le \epsilon(x)$}
        \State \Return \textbf{Feasible}
    \ElsIf{$\hat R_n - r_n > \epsilon(x)$}
        \State \Return \textbf{Infeasible}
    \EndIf
\EndWhile
\State \Return \textbf{Undecided}
\end{algorithmic}
\end{algorithm}

\paragraph{Interpretation.}
The algorithm adaptively determines how many samples are required for a given
input.
Strongly feasible or strongly infeasible inputs terminate after few samples,
while inputs near the feasibility boundary require more evidence.
Abstention (the undecided outcome) is treated as a correct inference result,
reflecting uncertainty rather than conservative failure.

\section{Experimental Evaluation}

We evaluate chance-constrained inference (CCI) with three goals:
(i) to verify that hallucination risk can be certified at inference time using
finite samples and anytime-valid guarantees;
(ii) to understand how feasibility depends on intrinsic input difficulty rather
than heuristic confidence thresholds; and
(iii) to compare CCI against commonly used selective-generation baselines under
identical stochastic decoding conditions.
The evaluation focuses on deployment-time reliability control rather than
language understanding accuracy.

\paragraph{Evaluation setting.}
All experiments are conducted using a deployed, stochastic large language model
accessed via an external inference API. Unless otherwise stated, all experiments employ the GROQ-hosted LLaMA-3.3-70B
generator~\cite{meta2024llama3}. For a fixed input, repeated generations induce a binary violation indicator
$H(x,y)$, where $H(x,y)=1$ denotes an invalid or unreliable response according to
an automatic verifier.
The underlying violation probability $R(x)=\mathbb{P}(H(x,y)=1)$ is unknown to
the inference procedure and must be inferred online from samples.

\paragraph{Inputs and intrinsic difficulty.}
We evaluate short factual and compositional questions inspired by
NaturalQuestions ~\citep{kwiatkowski2019natural} and HotpotQA ~\citep{yang2018hotpotqa}.
Inputs are grouped by intrinsic difficulty into \emph{easy}, \emph{medium}, and
\emph{hard} categories.
Easy inputs admit well-established factual answers; medium inputs require
nontrivial entity or relational reasoning; and hard inputs are underspecified or
infeasible (e.g., questions about future events), inducing inherently high
violation rates.

\paragraph{Inference protocol.}
For each input, CCI draws samples sequentially and maintains an anytime-valid
confidence sequence on the violation probability.
Sampling terminates when feasibility or infeasibility is certified with
confidence $1-\delta$ (we use $\delta=0.05$), or when a fixed sampling budget is
exhausted.
Inputs certified feasible return an output; infeasible or undecided inputs
abstain.
All results are aggregated over inputs within each difficulty group.

\paragraph{Baselines.}
We compare CCI against two widely used selective-generation heuristics:
confidence-based selective prediction (Conf-SP) and self-consistency selective
prediction (SC-SP).
Both baselines operate on the same stochastic generations as CCI but rely on
heuristic acceptance rules and provide no probabilistic risk guarantees.

\paragraph{Metrics.}
We report (i) the fraction of inputs certified feasible, infeasible, or
undecided; (ii) the average number of samples required for CCI to terminate; and
(iii) acceptance rate and empirical violation rate for baseline methods.

\subsection{Chance-Constrained Inference Results}

Table~\ref{tab:real_cci} summarizes CCI outcomes stratified by intrinsic input
difficulty.

\begin{table}[t]
\centering
\caption{Chance-constrained inference results under stochastic decoding
($\epsilon=0.4$, $N_{\max}=40$, $\delta=0.05$).
Fractions are computed over inputs within each difficulty group.}

\label{tab:real_cci}
\begin{tabular}{lcccc}
\toprule
Difficulty & Feasible & Infeasible & Undecided & Avg.\ Samples \\
\midrule
Easy   & \textbf{1.00} & 0.00 & 0.00 & 9.3 \\
Medium & 0.50 & 0.50 & 0.00 & 14.7 \\
Hard   & 0.00 & \textbf{1.00} & 0.00 & 6.1 \\
\bottomrule
\end{tabular}
\end{table}

CCI exhibits sharp and interpretable behavior.
Easy inputs are rapidly certified feasible, while hard inputs are rejected after
few samples.
Inputs near the feasibility boundary require more samples, reflecting adaptive
allocation of inference effort rather than conservative rejection.

\subsection{Comparison with Selective Baselines}

Table~\ref{tab:real_baselines} compares CCI against Conf-SP and SC-SP.
Bold values indicate the safest behavior.

\begin{table}[t]
\centering
\caption{Baseline comparison (acceptance rate / empirical violation rate).}
\label{tab:real_baselines}
\begin{tabular}{lccc}
\toprule
Difficulty & Conf-SP & SC-SP & CCI \\
\midrule
Easy
& 1.00 / 0.00
& 1.00 / 0.01
& \textbf{1.00 / 0.00} \\

Medium
& 1.00 / 0.31
& 1.00 / 0.28
& \textbf{0.50 / 0.00} \\

Hard
& 1.00 / 0.92
& 1.00 / 0.89
& \textbf{0.00 / 0.00} \\
\bottomrule
\end{tabular}
\end{table}

While heuristic baselines accept nearly all inputs regardless of intrinsic
difficulty, they incur substantial violation rates on hard inputs.
In contrast, CCI abstains whenever feasibility cannot be certified, yielding
zero empirical risk on accepted outputs by construction.

\paragraph{Summary.}
These results demonstrate that chance-constrained inference provides explicit,
input-adaptive control over hallucination risk under stochastic decoding under a deployed LLM inference API.
Unlike confidence-based heuristics, CCI distinguishes intrinsic infeasibility
from uncertainty and enforces reliability at the level of conditional
distributions rather than individual confidence scores.

\section{Conclusion}

We presented \emph{chance-constrained inference} as a principled framework for
deployment-time reliability control in large language models.
By modeling hallucinations and other invalid behaviors as stochastic constraint
violations induced by randomized decoding, the framework enforces explicit
probabilistic bounds on violation frequency, moving beyond heuristic confidence
thresholds and average-case error reduction.

Chance-constrained inference operates entirely at inference time and requires
no retraining, fine-tuning, or calibration of the underlying model.
It provides conditional, output-level risk control, admits natural extensions
to severity-weighted and hierarchical constraints, and composes safely across
repeated or agentic use through explicit control of distribution-level risk.
A key practical feature is its \emph{sequential, anytime-valid} design, which
adaptively determines the number of samples required to certify feasibility or
infeasibility and avoids the conservativeness of fixed-sample approaches.

Empirically, we demonstrated that chance-constrained inference exhibits sharp
and interpretable feasibility behavior across both open-domain and controlled
multi-hop question answering regimes.
In contrast, confidence-based selective prediction does not consistently
enforce risk constraints and fails to provide guarantees under sequential
composition, confirming that confidence calibration alone is insufficient for
reliability control under stochastic generation.

An important limitation of this study is its reliance on automatic proxies for
hallucination detection, which may be imperfect in open-ended settings.
Future work should integrate stronger semantic verifiers, human-in-the-loop
evaluation, and online adaptation of risk budgets to extend chance-constrained
inference to real-world safety-critical deployments.
More broadly, our results suggest that reliability for stochastic generative
models must be defined and enforced at the level of conditional output
distributions, rather than individual predictions or confidence scores.

\bibliography{tmlr}
\bibliographystyle{tmlr}

\appendix

\section{Equivalence of Conditional Chance Constraints}
\label{app:equivalence}

We show that the conditional chance constraint used throughout the paper admits an equivalent expectation-based formulation, which is convenient for both analysis and implementation.

For a fixed input $x$, recall the conditional risk constraint
\begin{equation}
\mathbb{P}\!\left(H(x,y)=1 \mid A(x,y)=1\right) \le \epsilon(x),
\label{eq:app_conditional}
\end{equation}
where $H(x,y), A(x,y) \in \{0,1\}$ and $\mathbb{E}[A(x,y)]>0$.

By the definition of conditional probability,
\begin{align}
\mathbb{P}(H=1 \mid A=1)
&=
\frac{\mathbb{P}(H=1, A=1)}{\mathbb{P}(A=1)} \\
&=
\frac{\mathbb{E}[H(x,y)A(x,y)]}{\mathbb{E}[A(x,y)]},
\end{align}
where the second equality follows because $H$ and $A$ are indicator random variables.

Multiplying both sides of Equation~\eqref{eq:app_conditional} by $\mathbb{E}[A(x,y)]$ yields the equivalent constraint
\begin{equation}
\mathbb{E}[H(x,y)A(x,y)] \le \epsilon(x)\,\mathbb{E}[A(x,y)].
\label{eq:app_expectation}
\end{equation}

Thus, enforcing the conditional chance constraint is equivalent to enforcing the expectation-based inequality whenever acceptance occurs with nonzero probability. This equivalence underlies the optimization formulation in Appendix~\ref{app:utility_risk} and the feasibility tests used during inference.

\section{Anytime Validity of Sequential Feasibility Certification}
\label{app:anytime}

The inference procedure in this paper relies on sequentially evaluating
confidence bounds on the violation probability and stopping adaptively
once feasibility or infeasibility is certified.
We justify that this procedure is valid under arbitrary data-dependent
stopping.

Let $y_1,y_2,\dots$ be i.i.d.\ samples from $p_\theta(\cdot \mid x)$ and define
the empirical violation rate
\[
\hat R_n(x) = \frac{1}{n}\sum_{i=1}^n H(x,y_i),
\]
where $H(x,y_i)\in\{0,1\}$ are Bernoulli random variables with mean $R(x)$.

We use a \emph{time-uniform confidence sequence} for the Bernoulli mean
constructed via a stitched Hoeffding bound \citep{howard2021confidence}.
For any $\delta\in(0,1)$, define the confidence radius
\[
r_n = \sqrt{\frac{\log(2\log_2(2n)/\delta)}{2n}}.
\]

Then, with probability at least $1-\delta$,
\begin{equation}
\forall n \ge 1:\quad
| \hat R_n(x) - R(x) | \le r_n .
\label{eq:cs_event}
\end{equation}

Equation~\eqref{eq:cs_event} holds \emph{simultaneously for all $n$}, and
therefore remains valid under arbitrary adaptive stopping rules.
This property is commonly referred to as \emph{anytime validity}.

The stopping rule used in Algorithm~\ref{alg:cci},
\[
\tau
=
\inf\left\{n :
\hat R_n(x) + r_n \le \epsilon(x)
\;\text{or}\;
\hat R_n(x) - r_n > \epsilon(x)
\right\},
\]
is measurable with respect to the filtration generated by the samples
$\{y_1,\dots,y_n\}$.
On the event~\eqref{eq:cs_event}, the returned decision at time $\tau$
is therefore correct.

Specifically:
\begin{itemize}
\item If the algorithm returns \emph{Feasible}, then
$
R(x) \le \hat R_\tau(x) + r_\tau \le \epsilon(x).
$
\item If the algorithm returns \emph{Infeasible}, then
$
R(x) > \hat R_\tau(x) - r_\tau > \epsilon(x).
$
\end{itemize}

Thus, feasibility and infeasibility are certified with confidence
at least $1-\delta$, regardless of the (random) stopping time.

\section{Sample Complexity and the Feasibility Gap}
\label{app:sample_complexity}

The number of samples required to certify feasibility or infeasibility
depends on how close the true violation probability is to the risk budget.
To make this dependence explicit, define the \emph{feasibility gap}
\[
\Delta(x) = R(x) - \epsilon(x).
\]

Certification occurs once the confidence radius $r_n$ satisfies
\[
r_n \le |\Delta(x)|,
\quad
r_n = \sqrt{\frac{\log(2\log_2(2n)/\delta)}{2n}}.
\]

Solving for $n$, this condition yields
\[
n = O\!\left(
\frac{\log(1/\delta) + \log\log n}{\Delta(x)^2}
\right).
\]

Ignoring the iterated logarithm term, which grows very slowly,
the dominant scaling is
\[
n = O\!\left(\frac{\log(1/\delta)}{\Delta(x)^2}\right).
\]

Thus, inputs that are strongly feasible ($\Delta(x)\ll 0$) or strongly
infeasible ($\Delta(x)\gg 0$) are certified after few samples,
while inputs near the feasibility boundary $\Delta(x)\approx 0$
require substantially more evidence.

Importantly, the objective of inference is not to estimate $R(x)$ precisely,
but to determine the \emph{sign} of $\Delta(x)$ with high confidence.
Sequential inference naturally adapts to this objective and avoids
unnecessary sampling on intrinsically infeasible inputs.

\section{Conditional Risk of the Returned Generation}
\label{app:conditional_risk}

We justify that the operational acceptance policy used in the paper
controls the conditional violation probability among returned outputs.

Recall that Algorithm~\ref{alg:cci} stops at a (random) time $\tau$ and,
if feasibility is certified, returns the next generated output
$y_{\tau+1} \sim p_\theta(\cdot \mid x)$.

\begin{lemma}[Conditional Risk Control]
\label{lem:conditional}
Assume:
(i) the confidence sequence event~\eqref{eq:cs_event} holds, and
(ii) $y_{\tau+1}$ is generated independently of $\{y_1,\dots,y_\tau\}$
from $p_\theta(\cdot \mid x)$.

If Algorithm~\ref{alg:cci} returns \emph{Feasible}, then
\[
\mathbb{P}\!\left(H(x,y_{\tau+1})=1 \mid \text{Feasible}\right)
\le \epsilon(x).
\]
\end{lemma}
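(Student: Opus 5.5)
The argument has two parts: on the confidence-sequence event, a Feasible verdict pins down $R(x)\le\epsilon(x)$; and the fresh output $y_{\tau+1}$ is independent of everything used to reach that verdict, so its violation probability is exactly $R(x)$.

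\emph{Step 1.} By Theorem~\ref{thm:cs_feasibility}, or the appendix argument on the event~\eqref{eq:cs_event}, a Feasible return at time $\tau$ gives
\[
R(x)\le \hat R_\tau(x)+r_\tau\le\epsilon(x).
\]
Here $R(x)$ is a deterministic property of the fixed input $x$ and the decoding distribution. Hence, on hypothesis (i), the verdict certifies a fixed inequality about $R(x)$, not a statement about random quantities.

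\emph{Step 2.} The event $\{\text{Feasible}\}$ is determined by $(y_1,\dots,y_\tau)$. Concretely, it is a union over $n\le N_{\max}$ of events $\{\tau=n,\ \hat R_n+r_n\le\epsilon(x)\}$, each measurable in $\sigma(y_1,\dots,y_n)$. By hypothesis (ii), $y_{\tau+1}$ is a fresh draw from $p_\theta(\cdot\mid x)$, independent of that $\sigma$-field. I would make this rigorous by conditioning on $\{\tau=n\}$ and using that $y_{n+1}$ is independent of $\mathcal{F}_n$; this is the strong Markov property for an i.i.d.\ sequence. This gives
\[
\mathbb{P}\bigl(H(x,y_{\tau+1})=1\mid \text{Feasible}\bigr)
=\sum_n \mathbb{P}(\tau=n\mid\text{Feasible})\,\mathbb{P}\bigl(H(x,y_{n+1})=1\bigr)=R(x).
\]

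\emph{Step 3.} Combining Steps 1 and 2 yields the bound $\le\epsilon(x)$.

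The main obstacle is the interaction between conditioning on Feasible and conditioning on the event~\eqref{eq:cs_event}. Because $R(x)$ is deterministic, there are only two cases:
\begin{itemize}
\item If $R(x)\le\epsilon(x)$, Step 2 alone gives the claim unconditionally, whatever the confidence-sequence event does.
\item If $R(x)>\epsilon(x)$, then on~\eqref{eq:cs_event} the algorithm can never return Feasible, so the hypotheses of the lemma are vacuous.
\end{itemize}
I would present the result this way, noting that the probability in the lemma is over the fresh draw $y_{\tau+1}$ for the fixed input $x$. The $\delta$ failure probability enters only through whether a Feasible certificate can be issued when $R(x)>\epsilon(x)$. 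This avoids any claim about the joint law of $y_{\tau+1}$ and the confidence event beyond their independence.
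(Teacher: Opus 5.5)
Your proof is correct and follows the paper's argument. On the confidence-sequence event, a Feasible verdict forces $R(x)\le\epsilon(x)$; independence of $y_{\tau+1}$ from the history, marginalized over $\{\tau=n\}$, then gives violation probability exactly $R(x)$. Your case split on the deterministic $R(x)$ makes explicit something the paper's sketch leaves implicit: the confidence event is used only to deduce $R(x)\le\epsilon(x)$ and is never conditioned on. That is the right choice, because the event involves $\hat R_n$ for $n>\tau$ and hence $y_{\tau+1}$ itself. For the same reason, drop the closing phrase \emph{beyond their independence}: $y_{\tau+1}$ is not independent of that event in general, and your argument never uses such independence.
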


\begin{proof}
On the confidence sequence event~\eqref{eq:cs_event}, feasibility implies
$R(x) \le \epsilon(x)$.
Since $y_{\tau+1}$ is an independent draw from $p_\theta(\cdot \mid x)$,
\[
\mathbb{P}(H(x,y_{\tau+1})=1 \mid \text{history})
= R(x) \le \epsilon(x).
\]
The result follows by marginalizing over the stopping history.
\end{proof}

\paragraph{Dependence considerations.}
The guarantee relies on conditional independence of $y_{\tau+1}$ from the
past given $x$.
In practice, strong coupling between generations (e.g., deterministic
decoding, beam search, or shared KV caches) may violate this assumption.
In such cases, the bound applies to the marginal generation distribution,
and additional decorrelation mechanisms (e.g., temperature sampling,
randomized prompts, or cache resets) may be required.

\section{Utility--Risk Optimization}
\label{app:utility_risk}

Chance constraints specify acceptable levels of hallucination risk but do not,
by themselves, determine which stochastic outputs should be returned when
multiple candidates are available.
This appendix formalizes the utility--risk tradeoff underlying the acceptance
policies discussed in the main paper.

Recall that an acceptance policy $A(x,y)\in\{0,1\}$ determines whether a
generated output is returned to the user.
For a fixed input $x$, the expected utility induced by $A$ is
\[
\mathbb{E}_{y \sim p_\theta(\cdot \mid x)}\!\left[ U(x,y) A(x,y) \right].
\]

We consider the chance-constrained optimization problem
\begin{align}
\max_{A} \quad &
\mathbb{E}\!\left[ U(x,y) A(x,y) \right] \\
\text{s.t.} \quad &
\mathbb{E}\!\left[ H(x,y) A(x,y) \right]
\le
\epsilon(x)\,\mathbb{E}\!\left[ A(x,y) \right],
\end{align}
which is equivalent to enforcing the conditional chance constraint
\[
\mathbb{P}\!\left(H(x,y)=1 \mid A(x,y)=1\right) \le \epsilon(x),
\]
whenever $\mathbb{E}[A(x,y)]>0$.

Introducing a Lagrange multiplier $\lambda \ge 0$, the Lagrangian becomes
\[
\mathcal{L}(A)
=
\mathbb{E}\!\left[
A(x,y)\big(
U(x,y) - \lambda H(x,y) + \lambda \epsilon(x)
\big)
\right].
\]

For fixed $\lambda$, the optimal acceptance policy admits the closed form
\[
A^\star(x,y)
=
\mathbb{I}\!\left[
U(x,y) - \lambda H(x,y) + \lambda \epsilon(x) \ge 0
\right].
\]

This characterization clarifies how optimal acceptance trades off task utility
against expected violation cost induced by the risk constraint.
While primarily conceptual, this formulation motivates our focus on feasibility
certification rather than heuristic confidence thresholds.

\end{document}